\newtheorem{theorem}{Theorem}
\newtheorem{definition}{Definition}
\newtheorem{lemma}[theorem]{Lemma}
\newtheorem{example}[]{Example}
\title{A Self-Paced Regularization Framework for Multi-Label Learning}
\author{Changsheng Li,
        Fan~Wei,
        Junchi~Yan,
        Weishan~Dong,
        Qingshan~Liu,~\IEEEmembership{Senior Member,~IEEE,}
        Xiaoyu Zhang,
        and Hongyuan Zha
\thanks{C. Li and W. Dong are with IBM Research-China, Beijing 100094, China.
Email: \{lcsheng, dongweis\}$@$cn.ibm.com.}
\thanks{F. Wei is with Department of Mathematics, Stanford University. E-mail: fanwei@stanford.edu.
}
\thanks{J. Yan is with East China Normal University, Shanghai, China.
E-mail: jcyan@sei.ecnu.edu.cn}
\thanks{Q. Liu is with Nanjing University of Information Science and Technology, Nanjing 210014, China.
Email: qsliu@nuist.edu.cn.}
\thanks{X. Zhang is with Institute of Information Engineering, Chinese Academy of Sciences, China.
E-mail: zhangxiaoyu@iie.ac.cn}
\thanks{H. Zha is with Georgia Institute of Technology, Atlanta, USA. \protect\\
E-mail: zha@cc.gatech.edu}
}
\begin{document}

\maketitle

\begin{abstract}
In this paper, we propose a novel multi-label learning framework, called Multi-Label Self-Paced Learning (MLSPL), in an attempt to incorporate the self-paced learning strategy into multi-label learning regime. In light of the benefits of adopting the easy-to-hard strategy proposed by self-paced learning, the devised MLSPL aims to learn multiple labels jointly by gradually including label learning tasks and instances into model training from the easy to the hard. We first introduce a self-paced function as a regularizer in the multi-label learning formulation, so as to simultaneously rank
priorities of the label learning tasks and the instances in each learning iteration. Considering that different multi-label learning scenarios often need different self-paced schemes during optimization, we thus propose a general way to find the desired self-paced functions.
 Experimental results on three benchmark datasets suggest the state-of-the-art performance of our approach.
\end{abstract}

\section{Introduction}

Multi-label learning has attracted much attention in the past decade \cite{huang2012multi,zhang2014review}.
Its goal is to learn a classifier to map the input instance into a label vector space, where each instance is associated with multiple labels instead of one single label. Different from multi-class learning, multiple labels in multi-label learning are often assumed to be correlated with each other.
Often, this correlation among labels is beneficial to accurately predicting labels of test instances.
Due to its empirical success, multi-label learning has been widely applied to various domains including image annotation \cite{nguyen2013multi}, video concept detection \cite{wang2008transductive}, web page categorization \cite{ji2008extracting}, and visual object recognition \cite{bucak2010multi}.

During the past years, many multi-label learning algorithms have been proposed. One simplified approach is to decompose
multi-label learning into multiple independent binary classification problems (one per label or category).
However, such a solution does not consider the relationship among labels, whereas previous studies \cite{elisseeff2001kernel,huang2012multi} have revealed that the label relationship is quite helpful and should be considered.
Therefore, several approaches attempt to exploit label correlations by incorporating external prior knowledge \cite{cai2004hierarchical,rousu2006learning,cesa2006hierarchical,hariharan2010large}.
Considering that the prior knowledge is often unavailable in real applications, many other approaches \cite{elisseeff2001kernel,boutell2004learning,zhu2005multi,yan2007model,qi2007correlative,zhang2010multi,huang2012multi} try to mine label relationships based on training data and incorporate the label correlations into the learning process of multi-label model.
In addition, there are also many works focusing on leveraging other learning techniques for multi-label learning, such as multi-instance multi-label learning \cite{yang2013multi}, active learning for multi-label learning \cite{li2013active}, and multi-label learning combined with multi-kernel learning \cite{ji2009multi}.

The algorithms above treat all the categories equally and also treat all the training instances per category equally when training the model.
However, in real-world scenarios, the complexities of different label learning tasks may differ quite much, and the same for complexities of different training instances in one label learning task.
For example, as shown in Figure \ref{fig:visual}, when learning the label \emph{tiger}, image (b) is clearly harder than image (a), since the color of the tiger in image (b) is quite similar to the background, and the tiger in (b) is partially occlusive by the trees.
Moreover, in image (a), the label \emph{Siberian tiger} is more difficult to learn than the label \emph{tiger}, since Siberian tiger is a subclass of tiger. In addition, many multi-label learning methods are associated with non-convex objective functions, which is prone to local minima especially in the presence of large corruption and bad starting point.
\begin{figure}
\centering
\subfigure[\emph{tiger},\ \emph{snow}, \ \emph{Siberian} \emph{tiger}]{\includegraphics[width=0.46\linewidth]{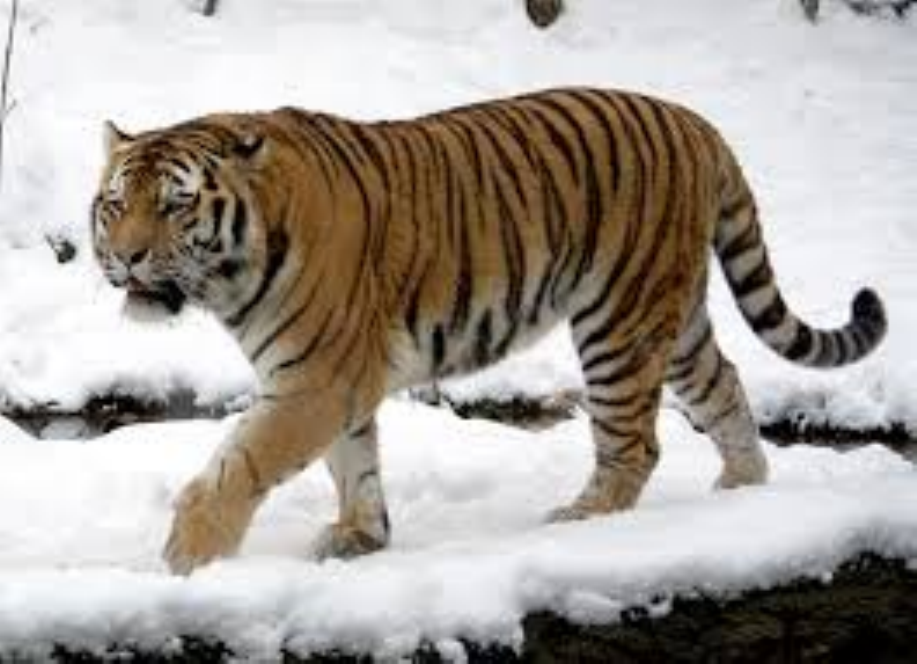}}
\subfigure[\emph{tiger},\ \emph{snow},\ \emph{trees},\ \emph{Siberian} \emph{tiger}]{\includegraphics[width=0.48\linewidth]{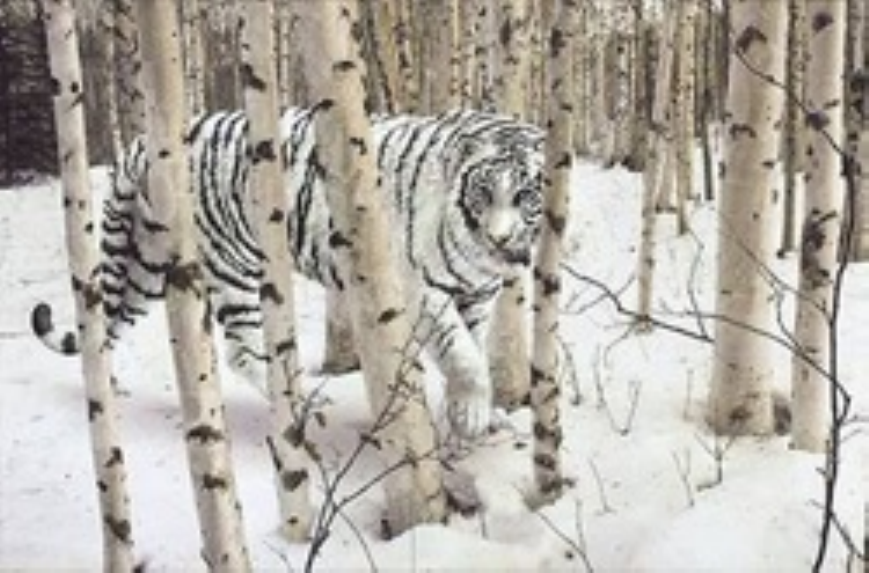}}
\caption{Illustration of complexities of labels and instances.}
\label{fig:visual}
\end{figure}

Inspired by how children learn concepts, self-paced learning \cite{kumar2010self} advocates a paradigm that learning should first consider `simple' or `easy' instances, and then gradually take `complex' or `hard' instances into account. By simulating such a process of human learning, it has been empirically verified that self-paced learning can mitigate the problem of local-minima during iterative learning \cite{kumar2010self,jiang2014self}, and exhibit better generalization behavior in many tasks, such as matrix factorization \cite{zhao2015self}, multi-view learning \cite{xu2015multi}, and multi-instance learning \cite{zhang2015self}. Based on these facts, we conclude that adding the training instances into the learning process in order of complexities can produce a more robust and accurate multi-label learning model.

This paper proposes a novel multi-label learning framework, called  \underline{M}ulti-\underline{L}abel \emph{\underline{S}elf-\underline{P}aced \underline{L}earning} (MLSPL), which is an effort to build a connection between multi-label learning and self-paced learning, in a principled fashion. MLSPL aims to learn multi-label model by introducing a self-paced function as a regularizer that can simultaneously take into consideration the complexities of both instances and labels during learning. Similar to human's learning mechanism, MLSPL should use different learning schemes for different multi-label learning scenarios. To achieve this, we present a general way to find the self-paced functions for the desired learning schemes.
Finally, we tailor a simple yet effective algorithm to solve the optimization problem.
Experimental results on three benchmark datasets demonstrate the effectiveness of the proposed approach, compared to the state-of-the-art methods.
\section{Background}
We first define notations and then briefly introduce the work \cite{huang2012multi} that our approach is originated from.

Let $\mathcal{X}=\mathbb{R}^d$ be the $d$-dimensional input feature space and $\mathcal{Y} = \{-1, +1\}^L$ the finite set of $L$ possible labels. Given a multi-label training set $\mathcal{D} = \{(\mathbf{x}_i , \mathbf{y}_i )\}_{i=1}^n$, where $\mathbf{x}_i =[x_{i1},\ldots, x_{id}] \in \mathcal{X}$ is the $i$-th instance and $\mathbf{y}_i =[y_{i1},\ldots,y_{iL}]\subseteq \mathcal{Y}$ is the label vector associated with $\mathbf{x}_i$. $y_{ij}$ is $+1$ if $\mathbf{x}_i$ has the $j$-th label and $-1$ otherwise.
The goal of multi-label learning is to learn a multi-label learner $h: \mathcal{X} \rightarrow 2^{\mathcal{L}}$
from the training set $\mathcal{D}$, so that we can predict a set of labels for each unseen instance.

As mentioned earlier, most existing multi-label learning methods attempt to exploit the correlations among labels to help learn the classifier $h$. Among these methods, one representative algorithm is  ML-LOC \cite{huang2012multi} which tries to exploit the correlations locally; it assumes that the instances are partitioned into $m$ different clusters and each cluster shares a subset of label correlations. Let $\mathbf{W}=[\mathbf{w}_1,\ldots,\mathbf{w}_L]\in \mathbb{R}^{n \times L}$, $\mathbf{Q}=[\mathbf{q}_1,\ldots,\mathbf{q}_n] \in \mathbb{R}^{m \times n}$, and $\mathbf{A}=[\mathbf{a}_1,\ldots,\mathbf{a}_m] \in \mathbb{R}^{L \times m}$ where $\mathbf{a}_j \in \mathbb{R}^L$ is the mean of all the label vectors in the $j$-th cluster. Before explaining the variables, we first provide its formulation.
\begin{align}\label{obj1}
\min_{\mathbf{W},\mathbf{A},\atop {\mathbf{Q}\in[0,1]^{m\times n}}} \ \sum_{l=1}^L\mathcal{L}(\mathbf{w}_l,\mathbf{Q};\mathcal{D})+ \alpha \Gamma(\mathbf{W}) + \beta \Omega(\mathbf{A}) \qquad\\ =\sum_{l=1}^L\!\mathcal{L}(\mathbf{w}_l,\mathbf{Q};\mathcal{D})\!\!+\!\!\alpha\!\sum_{l=1}^L\!\|\mathbf{w}_l\|^2\!\!+\!\!\beta\!\sum_{i=1}^n\!\sum_{j=1}^mq_{ij}\|\mathbf{y}_i\!\!-\!\!\mathbf{a}_j\|^2 \notag\\
s.t. \ \ \sum_{j=1}^m q_{ij}=1, \forall i\in [1,n], \qquad\qquad\qquad\qquad\nonumber
\end{align}
where $\mathbf{W}$ is the learned weight matrix with each column representing the weight vector for the corresponding task.
$\mathcal{L}(\mathbf{w}_l,\mathbf{Q};\mathcal{D})$ is the empirical loss on the training set $\mathcal{D}$ for the $l$-th label, defined as $\mathcal{L}(\mathbf{w}_l,\mathbf{Q};\mathcal{D})=\sum_{i=1}^n\mathcal{L}(\mathbf{w}_l,\mathbf{q}_i;\mathbf{x}_i,y_{il})$. Let $\mathbf{z}_i = [\phi(\mathbf{x}_i);\mathbf{q}_i]$\footnote{The symbol $[\mathbf{u}; \mathbf{v}]$ means flatten $\mathbf{u, v}$ into one single column vector.}  where $\phi$ is a feature mapping induced by a kernel $\kappa$.
 Then $\mathcal{L}(\mathbf{w}_l,\mathbf{q}_i;\mathbf{x}_i,y_{il}) = \max(0,1-y_{il}\mathbf{w}_l^T \mathbf{z}_i)$.
The idea of ML-LOC is as follows. If the label vector $\mathbf{y}_i$ is close to $\mathbf{a}_j$, then it is more likely that $\mathbf{x}_i$  belongs to the $j$-th cluster. Moreover, minimizing the third term in (\ref{obj1}) will result in a larger $q_{ij}$ when $\mathbf{y}_i$ is closer to $\mathbf{a}_j$. Thus, $\mathbf{Q}$ encodes the similarity between instances and clusters.
The first term aims to take the similarity information as additional features for the instance, and incorporates them into the learning process of the classifier.
By jointly optimizing these terms, the global discrimination fitting and local correlation sensitivity can be realized in a unified framework.
\section{Multi-Label Self-Paced Learning}
Here, we will first present a general multi-label learning formulation with self-paced paradigm. Then we give a principled way to find the self-paced functions for realizing desired self-paced schemes. Last, an efficient algorithm is designed to solve the proposed optimization problem.
\subsection{Proposed Formulation}
As can be seen from (\ref{obj1}), the objective function treats  all the training instances and all the label learning tasks equally.
However, we should prioritize learning the easier instances and the easier labels. Moreover, the non-convexity of problem (\ref{obj1}) renders the issue of bad local minima.
To overcome these shortcomings, one interesting principle is that learning should be first done on the easy instances, and then gradually take the hard.
This coincides with the idea for self-paced learning which is inspired by the way humans learn. Indeed, self-paced learning has empirically demonstrated its usefulness for mitigating bad local minima and achieving better model generalization  \cite{kumar2010self,jiang2015self}. More specifically, here we have `easy' and `hard' labels, as well as `easy' and `hard' instances. With the aim to incorporate the self-paced learning paradigm into the multi-label learning regime, we propose the following objective function by considering the complexities of labels and instances in a unified setting:
\begin{align}\label{obj2}
\min_{{\mathbf{W},\mathbf{A},\atop \mathbf{Q}\in[0,1]^{m\times n} },\atop \mathbf{V}\in[0,1]^{n\times L}} \ \sum_{i=1}^n\sum_{l=1}^Lv_i^{(l)}\mathcal{L}(\mathbf{w}_l,\mathbf{q}_i;\mathbf{x}_i,y_{il})+\alpha \Gamma(\mathbf{W}) \nonumber\\
+ \beta \Omega(\mathbf{A})+ f(\mathbf{V}, \lambda)\qquad\qquad \qquad\\
s.t. \ \ \sum_{j=1}^m q_{ij}=1, \forall i\in [1,n], \qquad\qquad \qquad\nonumber
\end{align}
where $\mathbf{v}^{(l)}=[v_1^{(l)},\ldots,v_n^{(l)}]^T$ consists of the weights of $n$ instances for the $l$-th label. $\mathbf{V}=[\mathbf{v}^{(1)},\ldots,\mathbf{v}^{(L)}]$.
Different from (\ref{obj1}), the first term in (\ref{obj2}) is a weighted loss term on the training data $\mathcal{D}$.
$f(\mathbf{V},\lambda)$ denotes the self-paced function or self-paced regularizer used to determine which label learning tasks and the corresponding instances to be selected during training: we can select `easy' labels and `easy' instances for learning at the beginning of training. As the learning is iteratively proceeded, we can gradually add `hard' labels and `hard' instances into the process.

More importantly, since different problems often need different self-paced learning schemes during training, there is no universal self-paced function for all applications. Although many self-paced regularizers have been proposed for various applications \cite{kumar2010self,zhang2015self,jiang2015self}, there lacks of a general method to derive the self-paced functions. In the following, we provide a general method to find the appropriate self-paced functions.
\subsection{Self-Paced Function}\label{self-paced function}
First, we introduce definition of the self-paced function from the recent work \cite{zhao2015self}:
\begin{definition}\label{def1}
Suppose that ${v}$ is a weight variable, $l$ is the loss, and $\lambda$ is the learning pace parameter.
$f({v}, \lambda)$ is called self-paced function, if
\begin{enumerate}
\item $f({v}, \lambda)$ is convex with respect to ${v}\in [0, 1]$;
\item $v^{\ast}$ is monotonically decreasing with respect to $l$, and it holds that $\lim_{l\rightarrow 0}v^{\ast} \leq1$, $\lim_{l\rightarrow \infty}v^{\ast} = 0$;
\item $v^{\ast}$ is monotonically increasing with respect to $\lambda$, and it holds that $\lim_{\lambda\rightarrow 0}v^{\ast}=0$, $\lim_{\lambda\rightarrow \infty}v^{\ast}\leq1$;
\end{enumerate}
\noindent where $v^{\ast}(l, \lambda)=\arg\min_{v\in[0,1]}vl+f(v,\lambda)$ for fixed $l, \lambda$.
\end{definition}

We can see that $v^*(l, \lambda)$ is an S-shaped function in $l$.
In order to find the self-paced function $f$, we first find a family of S-shaped functions $g_\lambda(l)$ with range in $[0,1]$ such that it is monotonically decreasing with respect to $l$, and that $\lim_{l \to \infty} g_\lambda(l) =0$ as well as $\lim_{l \to 0} g_\lambda(l) \leq1$. We further want $g_\lambda(l)$ to increase with respect to $\lambda$ when keeping $l$ fixed. Let $v^* = g_\lambda(l)$ be the $\arg\min$ in Definition \ref{def1}. Thus Conditions 2 and 3 in Definition \ref{def1} are satisfied.
Let $l= s(\lambda,v)$ be the inverse function of $g_\lambda(l)$. Then we propose the original lemma based on Definition \ref{def1}.
\begin{lemma}
{A smooth function $f(v, \lambda)$ is a self-paced function corresponding to $v^*$ if and only if }
{$\frac{\partial f(v, \lambda)}{\partial v} =-s(\lambda,v)$ and $\frac{\partial s(v, \lambda)}{\partial v} \leq 0$ \quad for $v \in [0,1]$.}
\end{lemma}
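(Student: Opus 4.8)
The plan is to characterize $f$ entirely through the first-order (stationarity) condition of the inner minimization $\min_{v\in[0,1]}\psi(v)$, where $\psi(v)=vl+f(v,\lambda)$, exploiting the fact that a self-paced $f$ is convex in $v$, so that $\psi$ is convex and its minimizer is pinned down by the sign of $\psi'$. The iff statement then splits into two implications, each of which reduces to matching the stationarity equation of $\psi$ against the prescribed inverse relation $l=s(\lambda,v)$.

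For the \emph{only if} direction, suppose $f$ is a self-paced function with minimizer $v^*(l,\lambda)=g_\lambda(l)$. By Condition~1 of Definition~\ref{def1}, $f$ is convex in $v$, hence so is $\psi$. At any $l$ for which the minimizer is interior, $v^*\in(0,1)$, stationarity gives $\psi'(v^*)=l+\frac{\partial f}{\partial v}(v^*,\lambda)=0$, i.e. $\frac{\partial f}{\partial v}(v^*,\lambda)=-l$. Since $v^*=g_\lambda(l)$ and $s$ is by construction the inverse of $g_\lambda$ in $l$, we have $l=s(\lambda,v^*)$, and substituting yields $\frac{\partial f}{\partial v}(v,\lambda)=-s(\lambda,v)$ as $v$ ranges over the interior values attained by $g_\lambda$. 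Differentiating once more and invoking convexity, $\frac{\partial^2 f}{\partial v^2}=-\frac{\partial s}{\partial v}\geq 0$, which is exactly $\frac{\partial s}{\partial v}\leq 0$.

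For the \emph{if} direction, assume $\frac{\partial f}{\partial v}=-s(\lambda,v)$ and $\frac{\partial s}{\partial v}\leq 0$. Then $\frac{\partial^2 f}{\partial v^2}=-\frac{\partial s}{\partial v}\geq 0$, so $f$ is convex in $v$, establishing Condition~1. Consequently $\psi$ is convex, and its stationary point solves $l+\frac{\partial f}{\partial v}=0$, i.e. $l=s(\lambda,v)$; inverting gives the minimizer $v^*=g_\lambda(l)$. Because $g_\lambda$ was chosen S-shaped---decreasing in $l$ with $\lim_{l\to\infty}g_\lambda=0$ and $\lim_{l\to 0}g_\lambda\leq 1$, and increasing in $\lambda$---Conditions~2 and 3 of Definition~\ref{def1} hold automatically, so $f$ is a self-paced function corresponding to $v^*$.

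The main obstacle is the treatment of the boundary. The stationarity equation $l+\frac{\partial f}{\partial v}=0$ holds only when the minimizer lies in the open interval $(0,1)$; when $g_\lambda(l)$ saturates at $0$ (as $l\to\infty$) or at its cap $\leq 1$ (as $l\to 0$), the correct optimality condition is a KKT inequality rather than an equality. I would therefore argue that the identity $\frac{\partial f}{\partial v}=-s(\lambda,v)$ extends to the closed interval $[0,1]$ by smoothness of $f$ and continuity of the one-sided limits, and that the boundary KKT inequalities are consistent with the monotonicity $\frac{\partial s}{\partial v}\leq 0$. A secondary point to verify is that $g_\lambda$ is strictly decreasing on the relevant range, so that the inverse $s=g_\lambda^{-1}$ is genuinely well-defined; this is precisely where the S-shape assumption is used rather than mere monotonicity.
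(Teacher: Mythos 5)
Your proof takes essentially the same route as the paper's: derive $\frac{\partial f}{\partial v}=-s(\lambda,v)$ from the first-order stationarity condition of the inner minimization $\min_{v}\,vl+f(v,\lambda)$, and identify $\frac{\partial s}{\partial v}\leq 0$ with convexity of $f$ in $v$. If anything, your version is more careful than the paper's --- the paper does not explicitly separate the two directions of the iff and silently passes over the boundary/KKT issue and the invertibility of $g_\lambda$ that you flag --- so the proposal is correct and, where it goes beyond the paper, it only fills in details the paper omits.
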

\begin{proof}
{Since $v^{\ast} = \arg \min v l + f(v, \lambda)$. We need}
\[ \frac{\partial (vl + f(v, \lambda))}{\partial v}  = 0.\]
{By plugging in $v^*$, we have}
\[ l + \frac{\partial f(v^*, \lambda)}{\partial v} = 0.\]
By our definition above, we know that $l = s(\lambda, v)$. So we need to have that $\frac{\partial f(v, \lambda)}{\partial v}  = -s(\lambda, v)$.
In this way we have related $f(v, \lambda)$ with $v^*$ which satisfies Conditions 2 and 3.

Now we {also need Condition 1 to be satisfied. It is equivalent to say that} $\frac{\partial^2 f(v, \lambda)}{\partial v^2}\geq 0.$ Thus we have
\[ \frac{\partial^2 f(v, \lambda)}{\partial v^2} = \frac{\partial ( -s(\lambda, v))}{\partial v} \geq 0.\]
\end{proof}

Since different multi-label learning scenarios often need different self-paced schemes, it is necessary to develop more schemes for exploring this interesting direction. Next, we discuss some examples of self-paced learning schemes.
\begin{example}
{We choose $v^*$ as the arctan function, which is a classical S-shaped  activation function.}
\begin{align}\label{v1} v^*(l, \lambda) = \frac{-\text{arctan} (l - \lambda)+\pi/2}{\pi}.\end{align}
{This function is centrally symmetric around the axis $l = \lambda$. And $v^*$ is invairant under $l - \lambda$.}

In order to obtain the inverse function of $v^*$, we have that ${\cot(\pi v^*)} + \lambda = l$. Therefore $s(\lambda,v) = \lambda + \cot(\pi v)$.
{Thus
\[-s(\lambda,v) = \frac{\partial f(v, \lambda)}{\partial v} = -\lambda - \cot (\pi v).\]}
{Integrating, we can obtain $f(v, \lambda)$ by}
\begin{align}\label{f1}
 f(v, \lambda) =- \int_v s(v, \lambda)= - \lambda v - \ln | \sin (\pi v)| /\pi. \end{align}
We also need to check that $f$ is convex. Therefore it suffices to check that
{$\frac{\partial s(v, \lambda)}{\partial v}  = -\pi^2 / \sin^2(\pi v) \leq 0. $}
Thus we have checked that {$f(v, \lambda)$ is a self-paced function corresponding to $v^*(l, \lambda)$}.
\end{example}

\begin{example}
$v^*$ is the classical sigmoid function. We want
\begin{align}\label{v2} v^*(l, \lambda) = \frac{2}{1+ e^{l / \lambda}}. \end{align}
This $v^*$ is invariant under $l / \lambda$

Clearly $v^*(l, \lambda) \in [0,1]$ as $l \geq 0$.
By solving $l$ in terms of $v^*$ and $\lambda$, we have $l = s(v, \lambda) = \lambda \ln (2/v - 1) .$ Thus we have
\begin{align}\label{f2} f(v, \lambda) = - \int_v s(v, \lambda) =\lambda( (2-v)\ln (2-v) + v \ln v).\end{align}
To check that $f$ is convex in $v$, we just need $\frac{\partial s}{\partial v} \leq 0$. This is the case since
$\frac{\partial s}{\partial v} = \frac{2\lambda}{(v-2)v} \leq 0$ when $v \in [0,1].$
\end{example}

\begin{example}
$v^*$ is the classical tanh function, another well-known activation function.
\begin{align}\label{v3} v^*(l, \lambda) = \frac{1}{1+ e^{2(l - \lambda)}}. \end{align}
This $v^*$ is invariant under $l - \lambda$.
Therefore $l = s(v, \lambda) =\frac{1}{2} \ln (1/v - 1) + \lambda.$
By $f(v, \lambda) = - \int_v s(v, \lambda) $, we have
\begin{align}\label{f3} f(v, \lambda) = \frac{1}{2} ( (1-v)\ln(1- v) + v \ln v) - \lambda v .\end{align}
To check that $f$ is convex in $v$, we just need $\frac{\partial s}{\partial v} \leq 0$. This is the case since
$\frac{\partial s}{\partial v} = \frac{1}{2(v-1)v} \leq 0$ when $v \in [0,1].$
\end{example}

\begin{example}
$v^*$ is another well-known activation function, the exponential function, defined by
\begin{equation} v^*(l, \lambda) =  e^{-l/\lambda}. \label{v4} \end{equation}
Clearly this is an S-shaped curve in $l$. When $l \to 0$ we have $v^*(l, \lambda) \to 1$. Also as $l \to \infty$, we have  $v^*(l, \lambda) \to 0$.
$v^*$ is invariant under $l/\lambda$.
Similar to before, we have $l = s(v, \lambda) =\lambda \sqrt{-\ln v}.$
Thus by $f(v, \lambda) = - \int_v s(v, \lambda)$ we have
\begin{align}\label{f4} f(v, \lambda)
= \lambda \left(-2 v -\frac{\sqrt{\pi} {\text{Erf}(-\sqrt{-\ln{v}})}}{\sqrt{-\ln v}}\right)\sqrt{-\ln v}/2,\end{align}
where $\text{Erf}(x) $ is the error function.
To check that $f$ is convex in $v$, we just need $\frac{\partial s}{\partial v} \leq 0$. This is the case since
$\frac{\partial s}{\partial v} = -\frac{1}{2v\sqrt{-\ln v}}$ which is no greater than 0 when $v \in [0,1].$
\end{example}

In general, after fixing one S-shaped function in terms of $l$, we can create a family of S-shaped functions $g_\lambda(l)$ by deciding whether it is invariant under $l/\lambda^c$ for some positive constant $c$ or $l-\lambda$ as shown in the examples above.
By integrating the corresponding inverse function $s(v, \lambda)$ and then {integrating $s$ against $v$}, we can find the self-paced function $f(v, \lambda)$.

\subsection{Optimization}
\begin{table}
\begin{center}
\begin{tabular}{l}
\hline
\label{spmll}
\textbf{Algorithm 1}   Multi-Label Self-Paced Learning Algorithm \\
\hline
\textbf{Input:} Data matrix ${\mathcal{D}}$, Number of groups $m$;\\
 \ \ \ \ \ \ \ \ \ \ \ \ Regularization parameters $\alpha$ and $\beta$;\\
 \ \ \ \ \ \ \ \ \ \ \ \ Self-paced parameters: $\lambda$, $\mu>1$;\\
1. {Initialize} $\mathbf{Q}$ and $\mathbf{A}$ by K-means, and initialize $\mathbf{W}$ by\\
\ \ \ \ \  bi-class SVM;\\
2. \textbf{while} \emph{not converge}  \textbf{do} \\
3.\ \ \ \ \ \ \  Update $\mathbf{V}$ by (\ref{v1}), (\ref{v2}), (\ref{v3}) or (\ref{v4});\\
4.\ \ \ \ \ \ \  Update $\mathbf{W}$ by solving (\ref{obj5});\\
5.\ \ \ \ \ \ \  Update $\mathbf{Q}$ by solving (\ref{obj6});\\
6.\ \ \ \ \ \ \  Update $\mathbf{A}$ by (\ref{obj8});\\
7.\ \ \ \ \ \ \  $\lambda\leftarrow \lambda \mu$; \% \emph{update the learning pace}\\
8. \textbf{end while} \\
\textbf{Output:} $\mathbf{W}.$\\
\hline
\end{tabular}
\end{center}
\end{table}

We adopt an alternating strategy to solve the optimization problem (\ref{obj2}). We first rewrite the objective function (\ref{obj2}) as:
\begin{align}\label{obj3}
\min_{{\mathbf{W},\mathbf{A},\atop \mathbf{Q}\in[0,1]^{m\times n} },\atop \mathbf{V}\in[0,1]^{n\times L}} \sum_{i=1}^n\sum_{l=1}^Lv_i^{(l)}\mathcal{L}(\mathbf{w}_l,\mathbf{q}_i;\mathbf{x}_i,y_{il})\!\!+\!\!\alpha\!\sum_{l=1}^L\!\|\mathbf{w}_l\|^2\nonumber\\
+\beta\sum_{i=1}^n\sum_{j=1}^mq_{ij}\|\mathbf{y}_i\!\!-\!\!\mathbf{a}_j\|^2 + \sum_{i=1}^n\sum_{l=1}^Lf(v_i^{(l)}, \lambda)\\
s.t. \ \ \sum_{j=1}^m q_{ij}=1, \forall i\in [1,n], \qquad\qquad\qquad\qquad\nonumber
\end{align}
where $f(v_i^{(l)},\lambda)$ can be any function of (\ref{f1}), (\ref{f2}), (\ref{f3}) and (\ref{f4}) as the self-paced regularizer.

\noindent\textbf{i) Solving $\mathbf{V}$ with other variables fixed}: the optimization function (\ref{obj3}) can be decomposed in $n\times L$ individual problems for $v_i^{(l)}$ as:
\begin{align}\label{obj4}
\min_{v_i^{(l)}\in[0,1]} v_i^{(l)}\mathcal{L}(\mathbf{w}_l,\mathbf{q}_i;\mathbf{x}_i,y_{il})+ f(v_i^{(l)}, \lambda)
\end{align}

According to the discussion in Sec. \ref{self-paced function}, the optimal solution $v_i^{(l)}$ can be written in a closed form as (\ref{v1}), (\ref{v2}), (\ref{v3}), (\ref{v4}).

\noindent\textbf{ii) Solving $\mathbf{W}$ with other variables fixed}: problem (\ref{obj3}) can be decomposed into $L$ individual problems for $\mathbf{w}_l$ as:
\begin{align}\label{obj5}
\min_{\mathbf{w}_l} \sum_{i=1}^nv_i^{(l)}\mathcal{L}(\mathbf{w}_l,\mathbf{q}_i;\mathbf{x}_i,y_{il})\!\!+\alpha\|\mathbf{w}_l\|^2
\end{align}
This is a cost-sensitive SVM model, which can be solved by LIBSVM \cite{chang2011libsvm} software package.

\noindent\textbf{iii) Solving $\mathbf{Q}$ with other variables fixed}: the optimization function (\ref{obj3}) can be decomposed in $n$ individual problems for $\mathbf{q}_i$ as:
\begin{align}\label{obj6}
\min_{\mathbf{q}_i} \sum_{l=1}^Lv_i^{(l)}\mathcal{L}(\mathbf{w}_l,\mathbf{q}_i;\mathbf{x}_i,y_{il})
+\beta\sum_{j=1}^mq_{ij}\|\mathbf{y}_i\!\!-\!\!\mathbf{a}_j\|^2 \\
s.t. \ \ \sum_{j=1}^m q_{ij}=1, \mathbf{q}_i\in[0,1]^{m}\qquad\qquad\nonumber
\end{align}

This is a linear programming program, which can be solved efficiently.

\noindent\textbf{iv) Solving $\mathbf{A}$ with other variables fixed}: To obtain $\mathbf{A}$, we can optimize the following objective function:
\begin{align}\label{obj7}
\min_{\mathbf{a}_j} \sum_{i=1}^n q_{ij}\|\mathbf{y}_i-\mathbf{a}_j\|^2
\end{align}

Taking the derivative of (\ref{obj7}) with respect to $\mathbf{a}_j$, and setting it to zero, we have
\begin{align}\label{obj8}
\mathbf{a}_j=\sum_{i=1}^nq_{ij}\mathbf{y}_i
\end{align}

We repeat the above process until the algorithm converges. Algorithm 1 summarizes the algorithm of  multi-label self-paced learning. For testing, we adopt the same strategy as that of ML-LOC, i.e., first predict the code $\mathbf{q}_i$ for unseen test data $\mathbf{x}_i$, and then predict its labels $\mathbf{y}_i$ based on $\mathbf{x}_i$ and $\mathbf{q}_i$.

\begin{table*}
\begin{center}
\caption{Results (mean $\pm$ std.) on the \emph{flags} dataset. Boldface in the table denotes the best performance.}
\label{flags}
\centering
\begin{tabular}{|c|c|c|c|c|c|c|c|}
\hline
 Dataset &   criteria          &   BSVM    &   ML-kNN   &  TRAM & RankSVM & ML-LOC & MLSPL \\
\hline
\multirow{10}{*}{\emph{flags}}  & \multirow{2}{*}{hamming loss $\downarrow$} &0.2966     & 0.3202   &  0.2907 &  0.3009 &   0.2719   & \textbf{0.2518 }\\
                                                        &     &$\pm$0.0181  &  $\pm$0.0196  &     $\pm$0.0111  &  $\pm$0.0222   &$\pm$0.0102 &$\pm$\textbf{0.0067}\\
 \cline{2-8} & \multirow{2}{*}{ranking loss $\downarrow$}       &0.2618   &  0.2465  &     0.2228  &  0.2300   & 0.2415  & \textbf{0.2040}\\
                                                                 &     &$\pm$0.0297  & $\pm$0.0190  &     $\pm$0.0101  & $\pm$0.0098   &$\pm$0.0120 &$\pm$\textbf{0.0011}\\
 \cline{2-8} & \multirow{2}{*}{one error $\downarrow$}    &0.2687   & 0.2446  &    0.1995   &   0.2163  &  0.2613   & \textbf{0.1481} \\
 &     &$\pm$0.0424  &  $\pm$0.0378  &   $\pm$0.0374  & $\pm$0.0148   &$\pm$0.0268 &$\pm$\textbf{0.0022}\\
  \cline{2-8} & \multirow{2}{*}{coverage $\downarrow$}           &4.0433  &   3.9928  &   3.8897  &  3.9165 &  3.9000  & \textbf{3.7320} \\
  &     &$\pm$0.1786  &  $\pm$0.1639  &    $\pm$0.1050  &  $\pm$0.1378  &$\pm$0.1045 &$\pm$\textbf{0.0248}\\
   \cline{2-8} & \multirow{2}{*}{average precision $\uparrow$}    &0.7802  &  0.7891  &   0.8102  &   0.8024 &  0.7939   & \textbf{0.8239} \\
   &     &$\pm$0.0220  &   $\pm$0.0104  &   $\pm$0.0116  &  $\pm$0.0040   &$\pm$0.0086 &$\pm$\textbf{0.0020}\\
\hline
\end{tabular}
\end{center}
\end{table*}

\begin{table*}
\begin{center}
\caption{Results (mean $\pm$ std.) on the \emph{scene} dataset. Boldface in the table denotes the best performance.}
\label{scene}
\centering
\begin{tabular}{|c|c|c|c|c|c|c|c|}
\hline
 Dataset &   criteria          &   BSVM    &   ML-kNN   &  TRAM & RankSVM & ML-LOC & MLSPL \\
\hline
\multirow{10}{*}{\emph{scene}}  & \multirow{2}{*}{hamming loss $\downarrow$}   &0.1183  &  0.1208  &  0.1096 &  0.2567  & 0.1009    & \textbf{0.0946} \\
                                  &     &$\pm$0.0040  &  $\pm$0.0051  &   $\pm$0.0030  &  $\pm$0.0105  & $\pm$0.0036 & $\pm$\textbf{0.0029}\\
 \cline{2-8} & \multirow{2}{*}{ranking loss $\downarrow$}    &0.1099   &  0.1153 &    0.0965  &  0.4492   & 0.0943  & \textbf{0.0837}\\
 &     &$\pm$0.0058  &  $\pm$0.0053  &    $\pm$0.0039 &  $\pm$0.0204   & $\pm$0.0043 & $\pm$\textbf{0.0005}\\
 \cline{2-8} & \multirow{2}{*}{one error $\downarrow$}   &0.2984   &  0.3175  &   0.2921 &  0.7855 &  0.2638  & \textbf{0.2412} \\
 &     &$\pm$0.0161  &  $\pm$0.0124  &  $\pm$0.0089 &  $\pm$0.0309  & $\pm$0.0132 & $\pm$\textbf{0.0040}\\
  \cline{2-8} & \multirow{2}{*}{coverage $\downarrow$}           &0.6395  &  0.6635  &   0.5680  &  2.3308&  0.5610  & \textbf{0.5034} \\
  &     &$\pm$0.0298  &  $\pm$0.0274  &   $\pm$0.0205  &  $\pm$0.0979   & $\pm$0.0227 &$\pm$\textbf{0.0050}\\
   \cline{2-8} & \multirow{2}{*}{average precision $\uparrow$}           &0.8180  &  0.8084  &   0.8279  &   0.4507 &  0.8407  & \textbf{0.8551} \\
   &     &$\pm$0.0097  &  $\pm$0.0071  &  $\pm$0.0055 &  $\pm$0.0205   & $\pm$0.0071 & $\pm$\textbf{0.0024}\\
\hline
\end{tabular}
\end{center}
\end{table*}

\section{Experiments}
To verify the effectiveness of the proposed MLSPL, we perform our method on three benchmark datasets: the \emph{flags} dataset, the \emph{scene} dataset, and the \emph{emotions} dataset.\footnote{These datasets can be downloaded from {http://mulan.sourceforge.net/datasets-mlc.html}} \emph{flags} and \emph{scene} are two image datasets, and \emph{emotions} is a music dataset.
They have 194, 593, and 2407 instances and 7, 6, 6 possible labels, respectively.
To further evaluate MLSPL's performance, we compare it with several state-of-the-art multi-label learning algorithms\footnote{The codes of the compared methods are obtained from the corresponding authors.}.
We first compare with ML-LOC \cite{huang2012multi} that is the most related multi-label learning approach to ours. We also compare with ML-kNN \cite{zhang2007ml} and  RankSVM \cite{elisseeff2001kernel} that  consider first-order and second order correlations, respectively. In addition, we compare with TRAM \cite{kong2013transductive} that is proposed recently.  Finally, we compare with another baseline BSVM \cite{boutell2004learning} that learns a binary SVM for each label.
LibSVM \cite{chang2011libsvm} is used to implement the SVM models for BSVM, ML-LOC and MLSPL.
For the compared methods, the parameters recommended in the corresponding literatures are used. In our method, the regularization parameters $\alpha$ and $\beta$ are set the same with ML-LOC. The initial self-paced parameter $\lambda$ and $\mu$ is searched from $\{10^{-5},10^{-4},10^{-3},10^{-2}\}$ and $\{1.1, 1.2, 1.3, 1. 4, 1.5\}$, and then $\lambda$ is iteratively increased to make  `harder' label tasks and instances included gradually.
We evaluate the performances of the compared approaches with five commonly used multi-label criteria: hamming loss, ranking loss, one error, coverage, and average precision. These criteria measure the performance from different aspects and the detailed definitions can be found in \cite{schapire2000boostexter,zhou2012multi}.
In the following experiments, on each data set, we randomly select $30\%$ instances as the training data, and use the rest $70\%$ instances as the testing data.
We repeat each experiment 10 times, and report the average results as well as standard deviations over the 10 repetitions.

\subsection{General Performance}
In this section, we test the general performance of our method on the three datasets. We use the sigmod activation function as our self-paced learning scheme in this experiment.
Table \ref{flags}, \ref{scene}, and \ref{emotion} summarize the performances of different methods in terms of the five evaluation criteria.
Notice that for average precision, a larger value means a better performance, whereas for the other four criteria, the smaller, the better.
From these tables, we can see that our method outperforms the other approaches significantly on all the three datasets.
For example, on the \emph{flags} dataset, our method achieves 13.4\%, 8.4\%, 25.8\%, 4.1\%, and 1.7\% relative improvement in terms of the five evaluation criteria over TRAM that obtains the second best results on this dataset.
In addition, as can be seen from the objective functions (\ref{obj1}) and (\ref{obj2}), ML-LOC is a special case to our method: when all the entries in $\mathbf{V}$ are 1, our method is reduced to ML-LOC. This shows that our method can improve the prediction performance of the model by jointly considering the complexities of the labels and the instances.
\begin{figure*}
\centering
\subfigure{\includegraphics[width=0.19\linewidth]{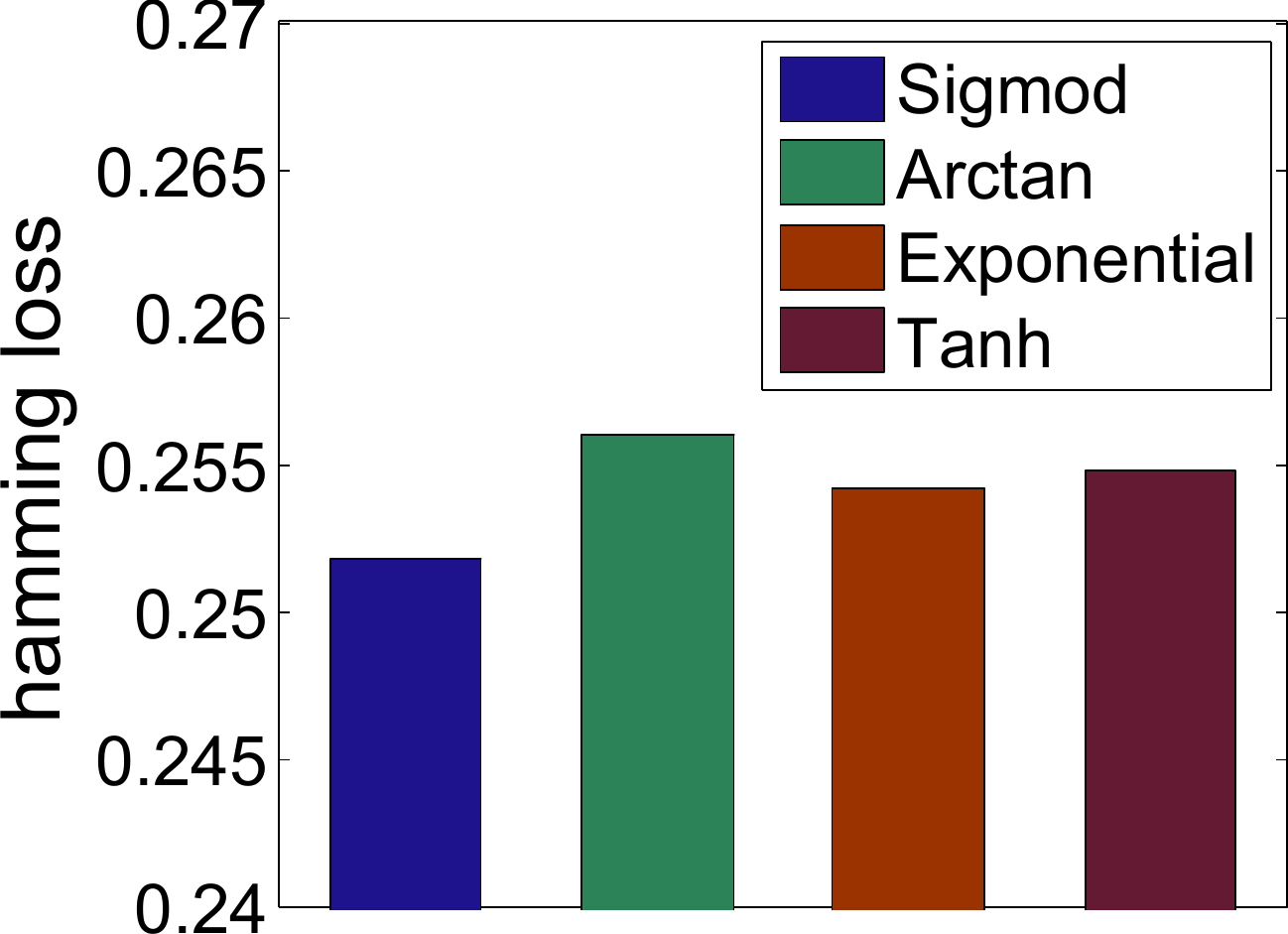}}
\subfigure{\includegraphics[width=0.19\linewidth]{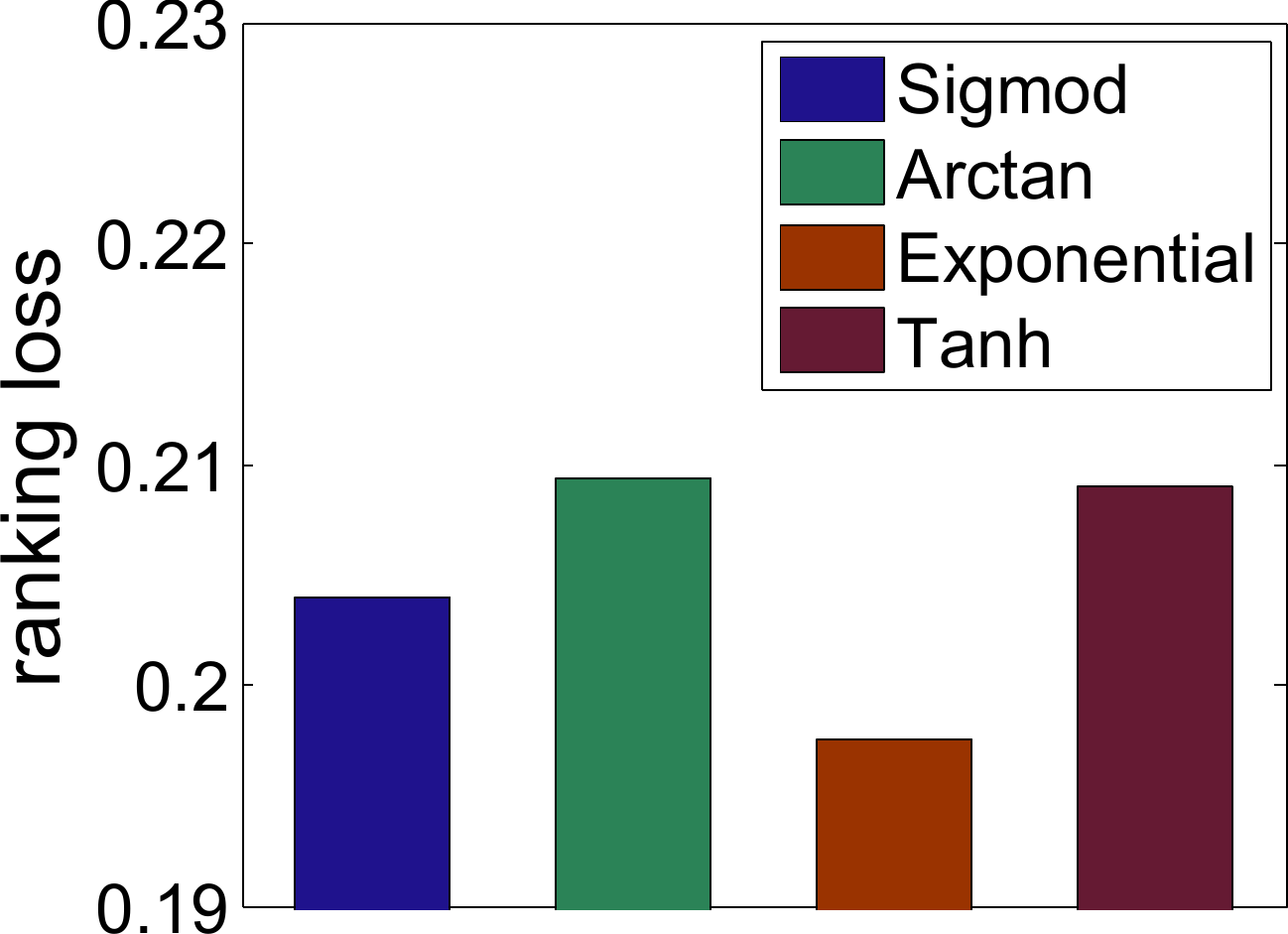}}
\subfigure{\includegraphics[width=0.19\linewidth]{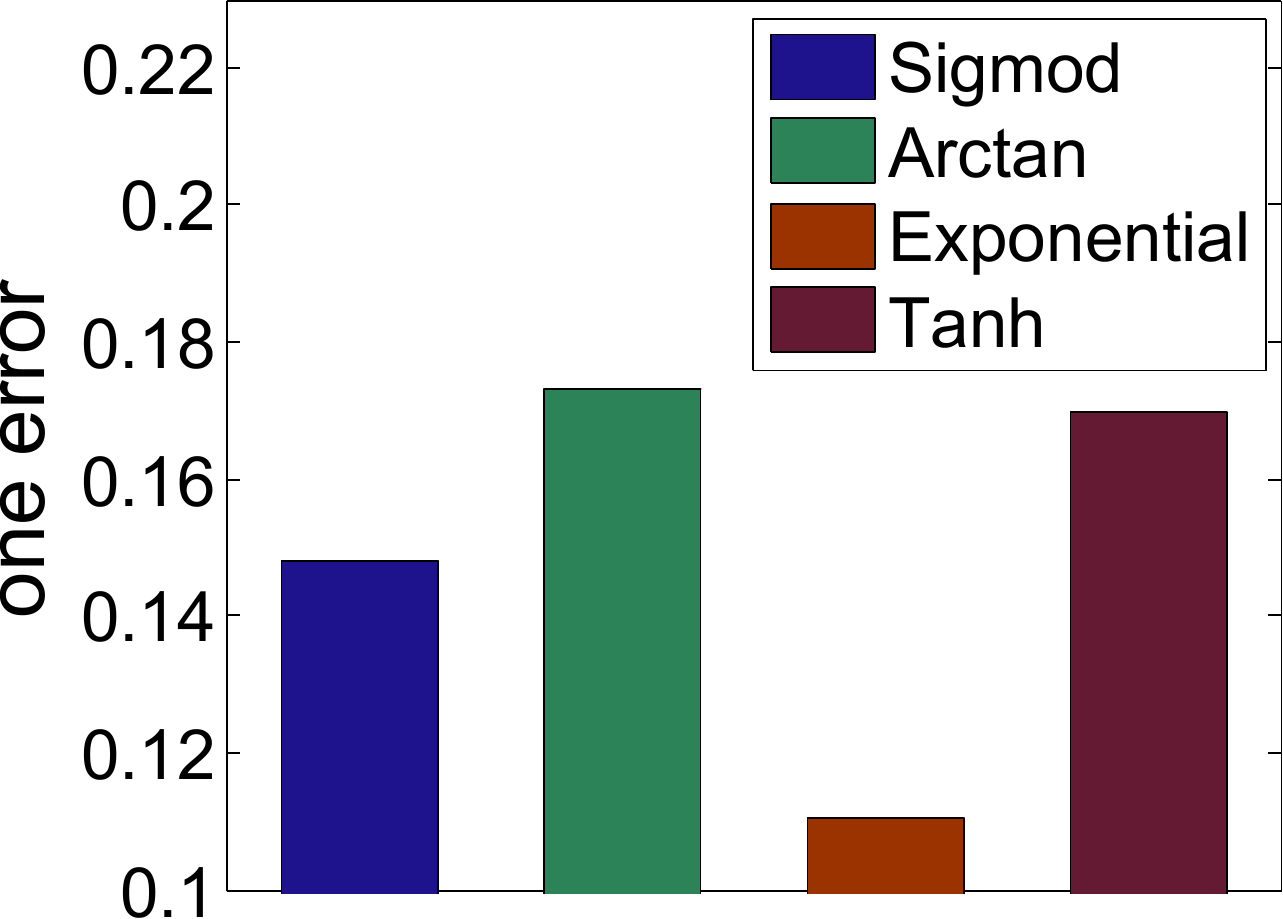}}
\subfigure{\includegraphics[width=0.19\linewidth]{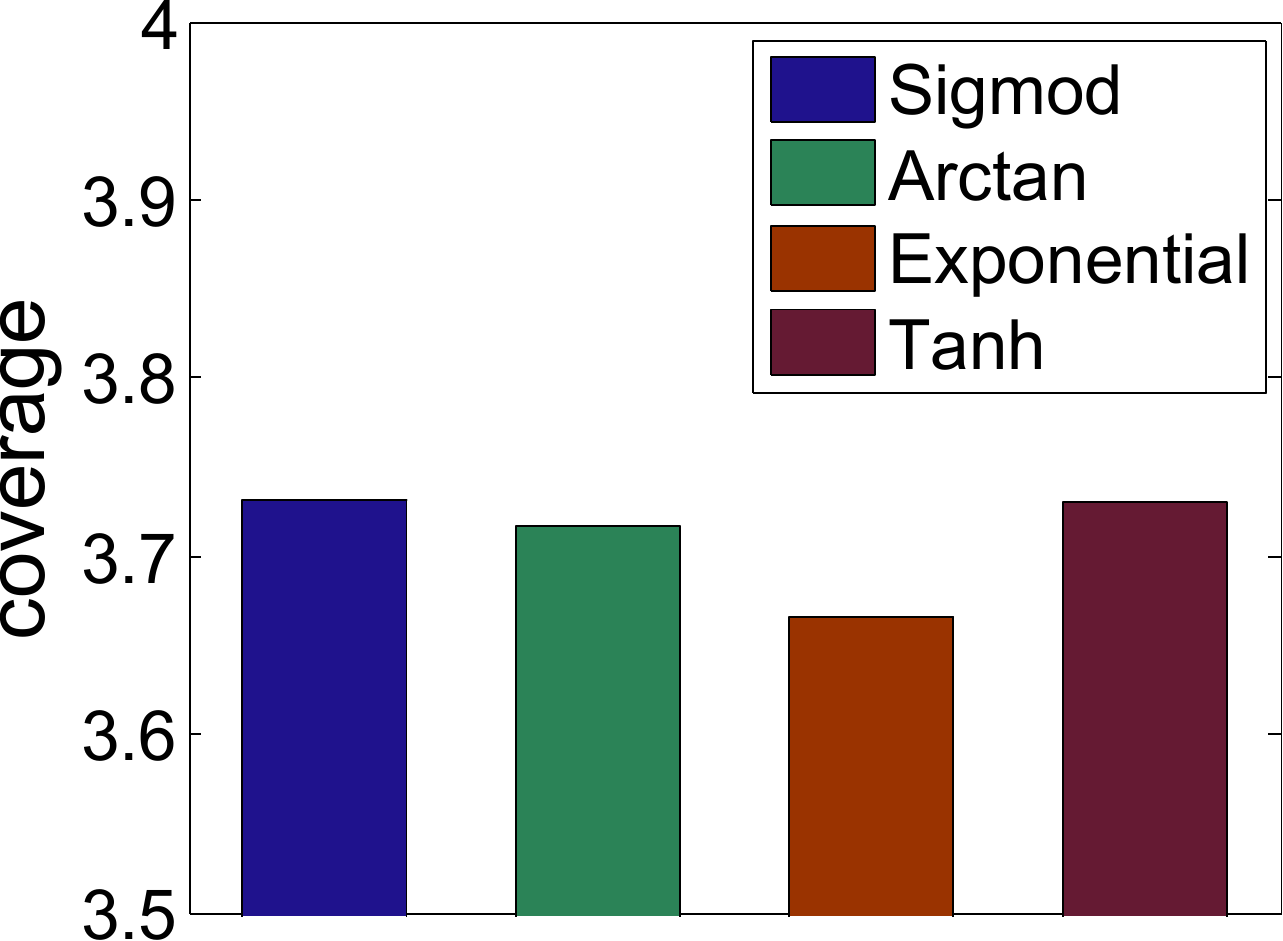}}
\subfigure{\includegraphics[width=0.19\linewidth]{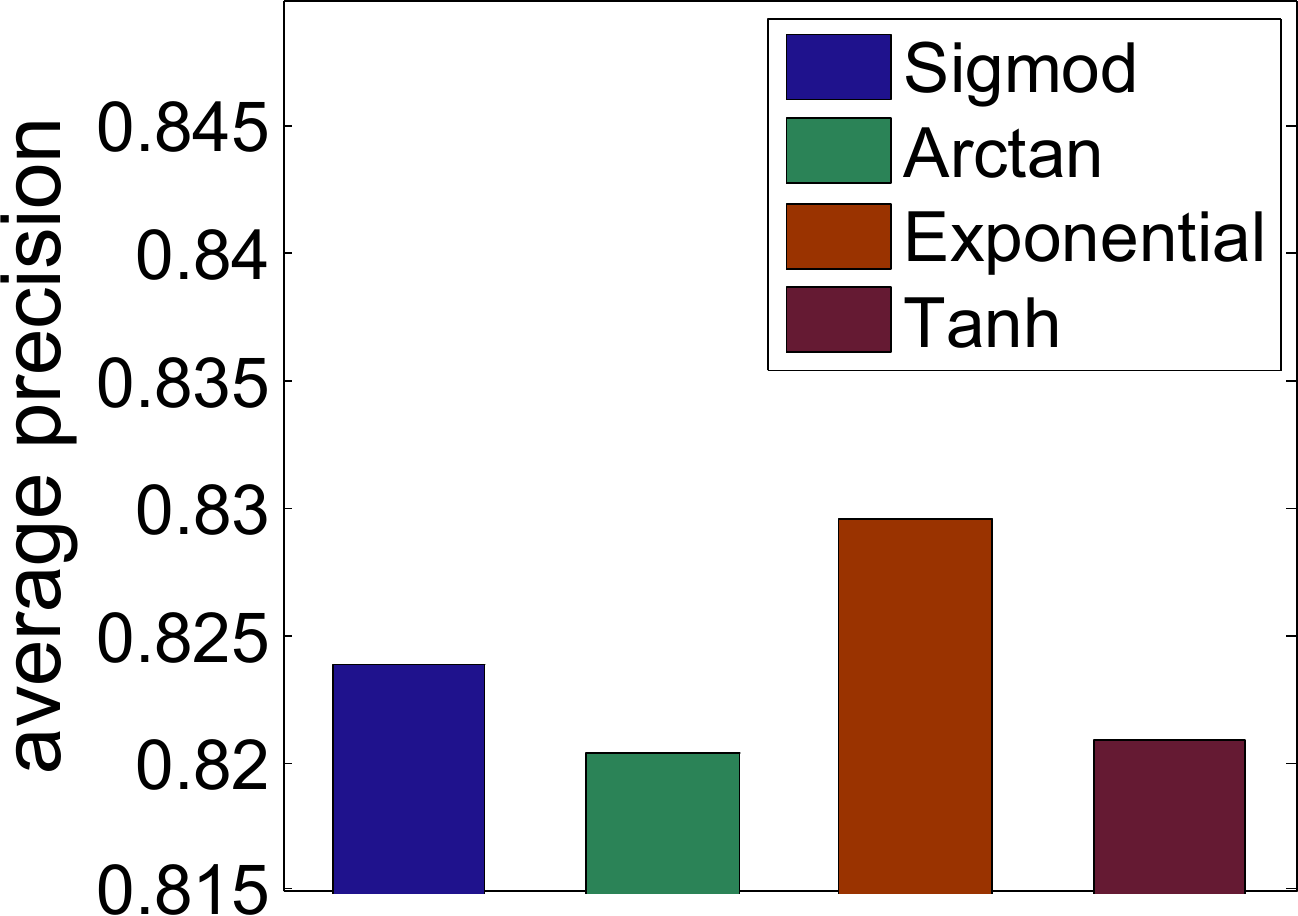}}
\subfigure{\includegraphics[width=0.19\linewidth]{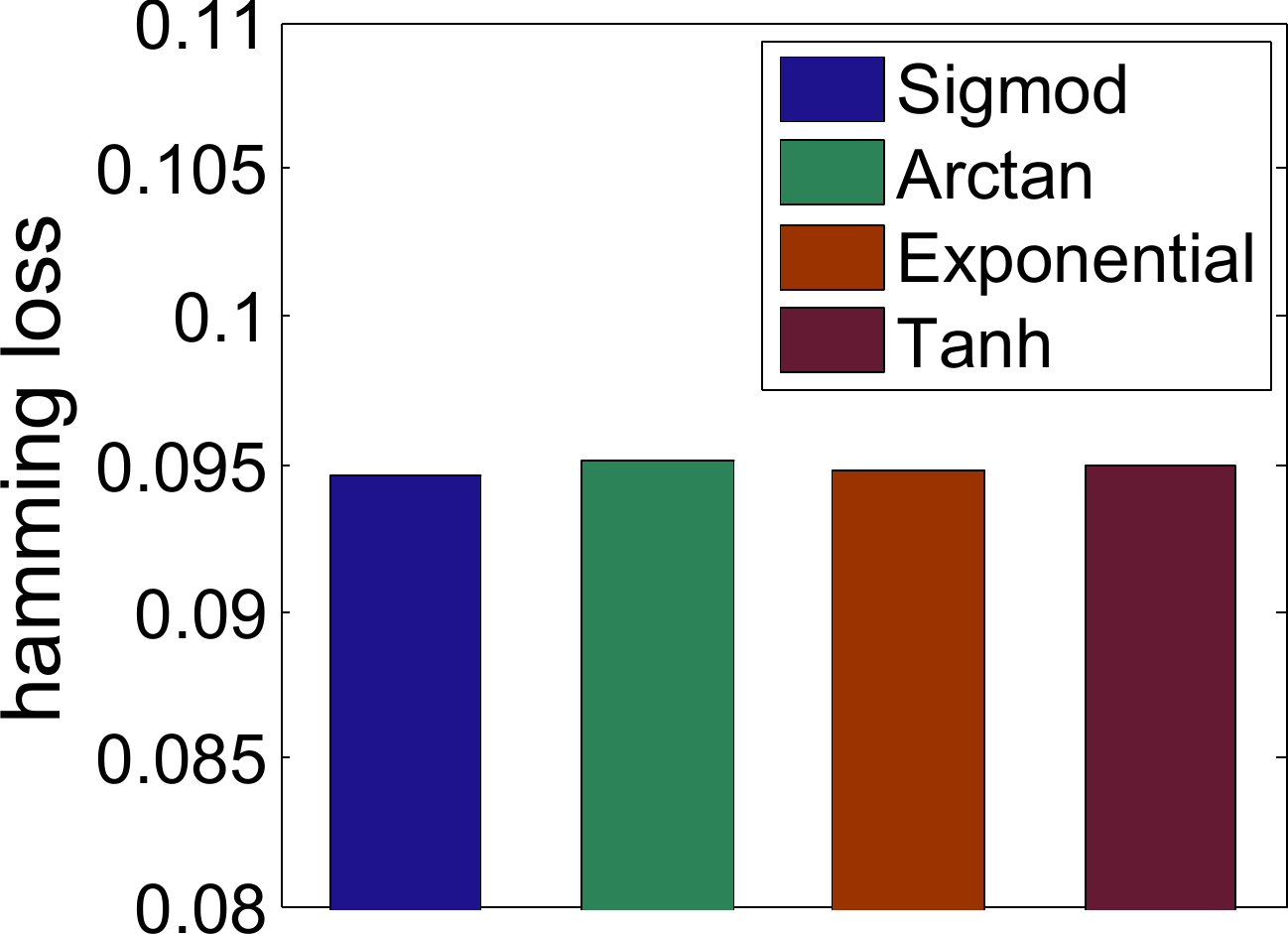}}
\subfigure{\includegraphics[width=0.19\linewidth]{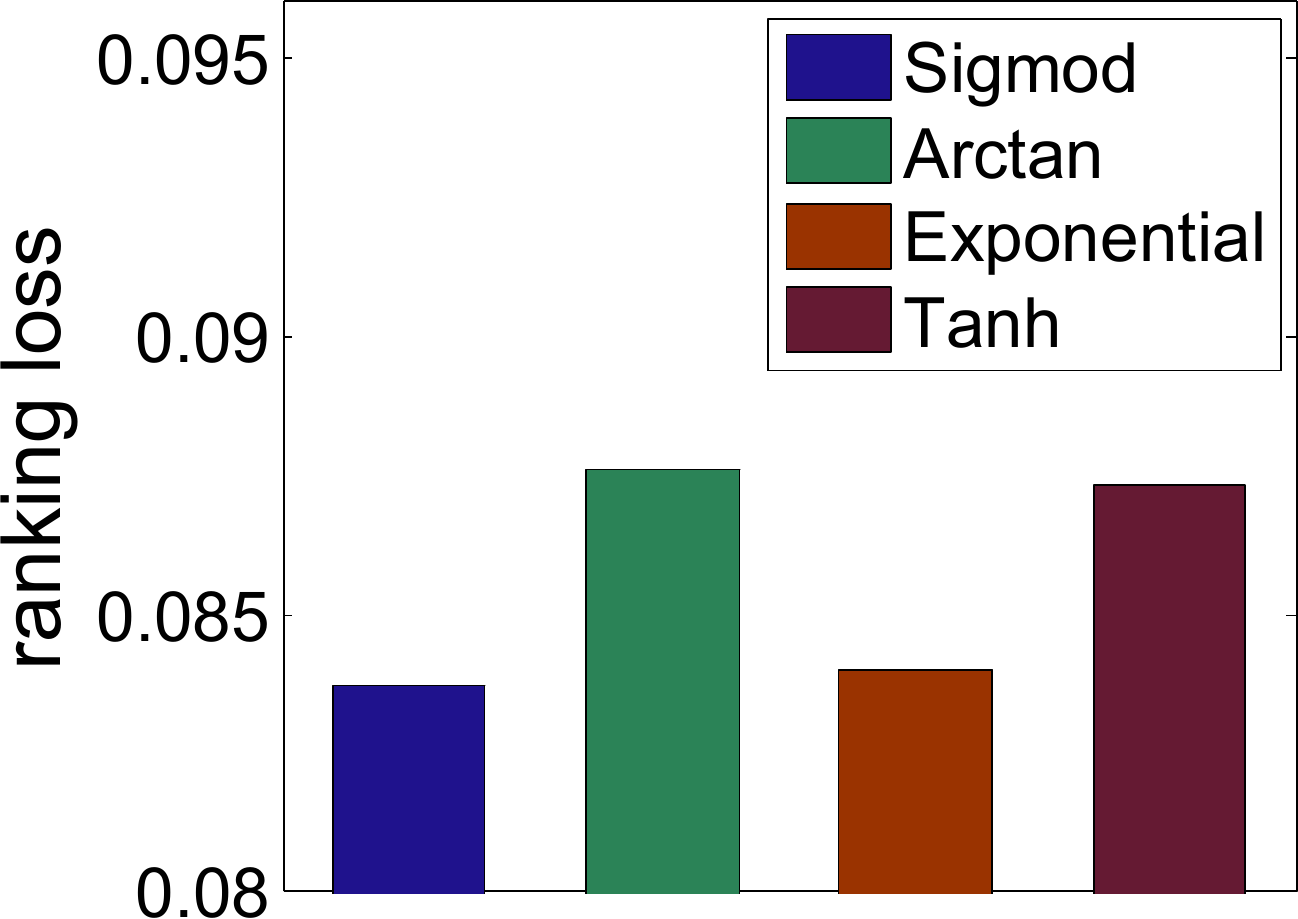}}
\subfigure{\includegraphics[width=0.19\linewidth]{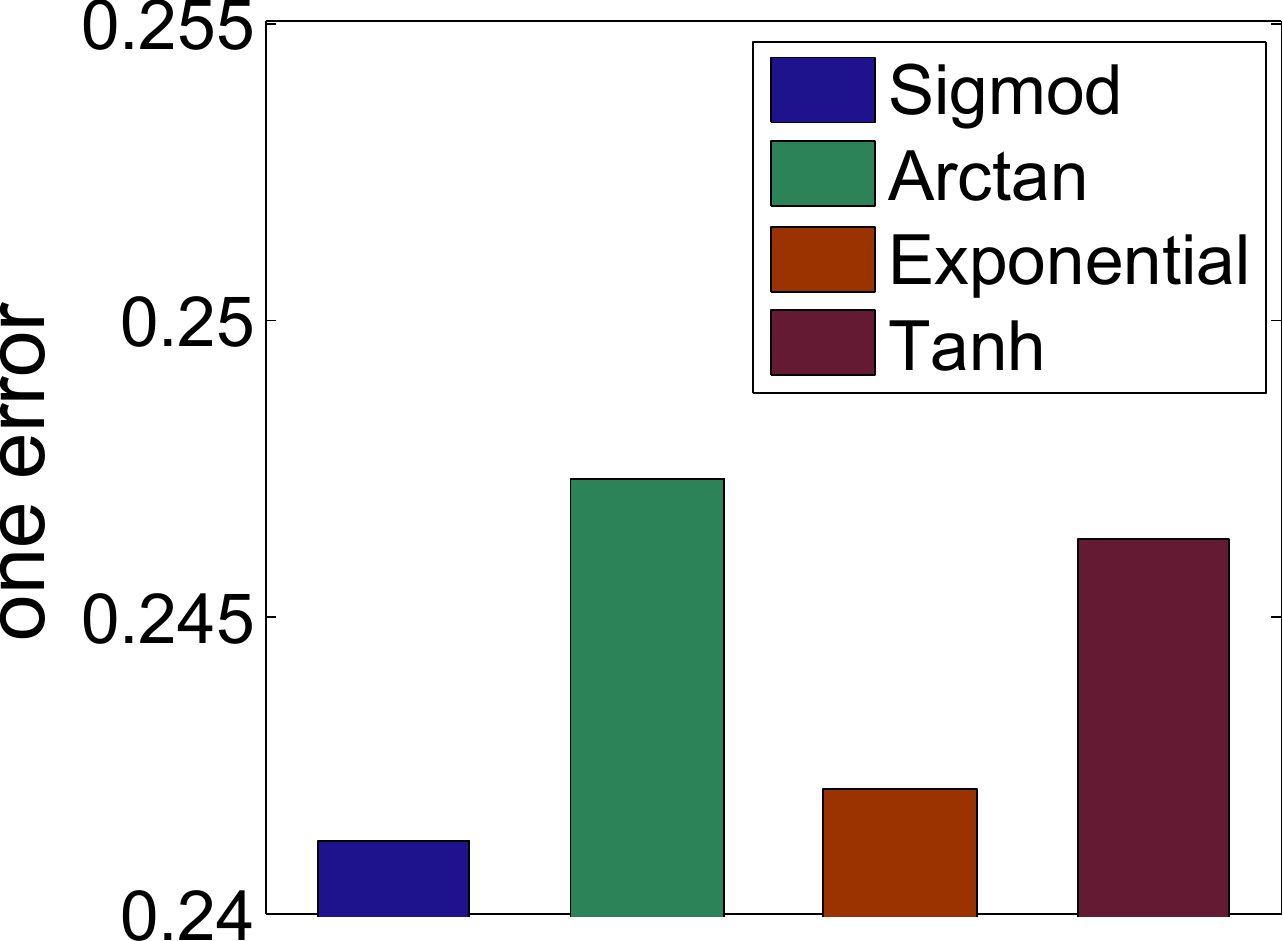}}
\subfigure{\includegraphics[width=0.19\linewidth]{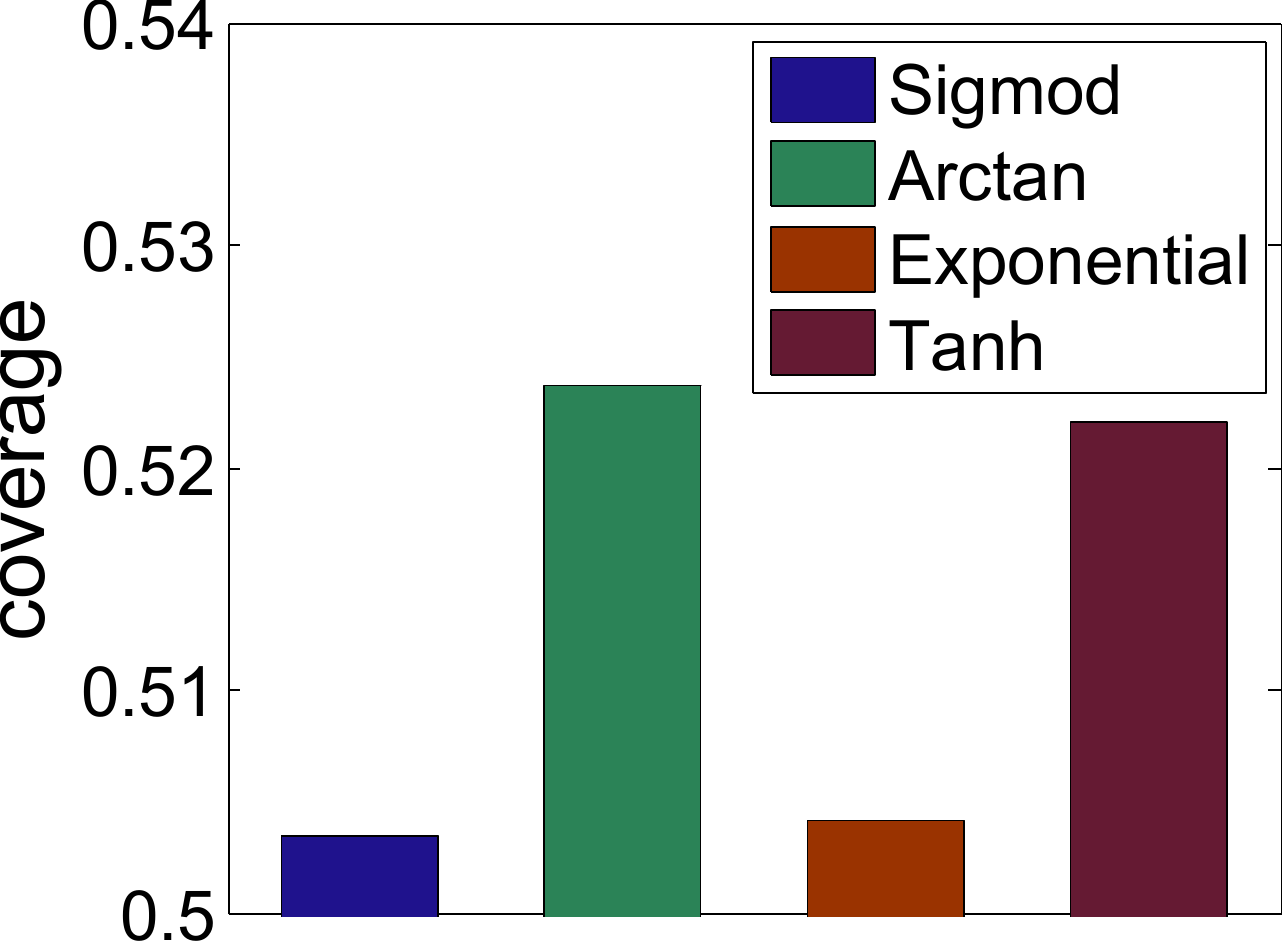}}
\subfigure{\includegraphics[width=0.19\linewidth]{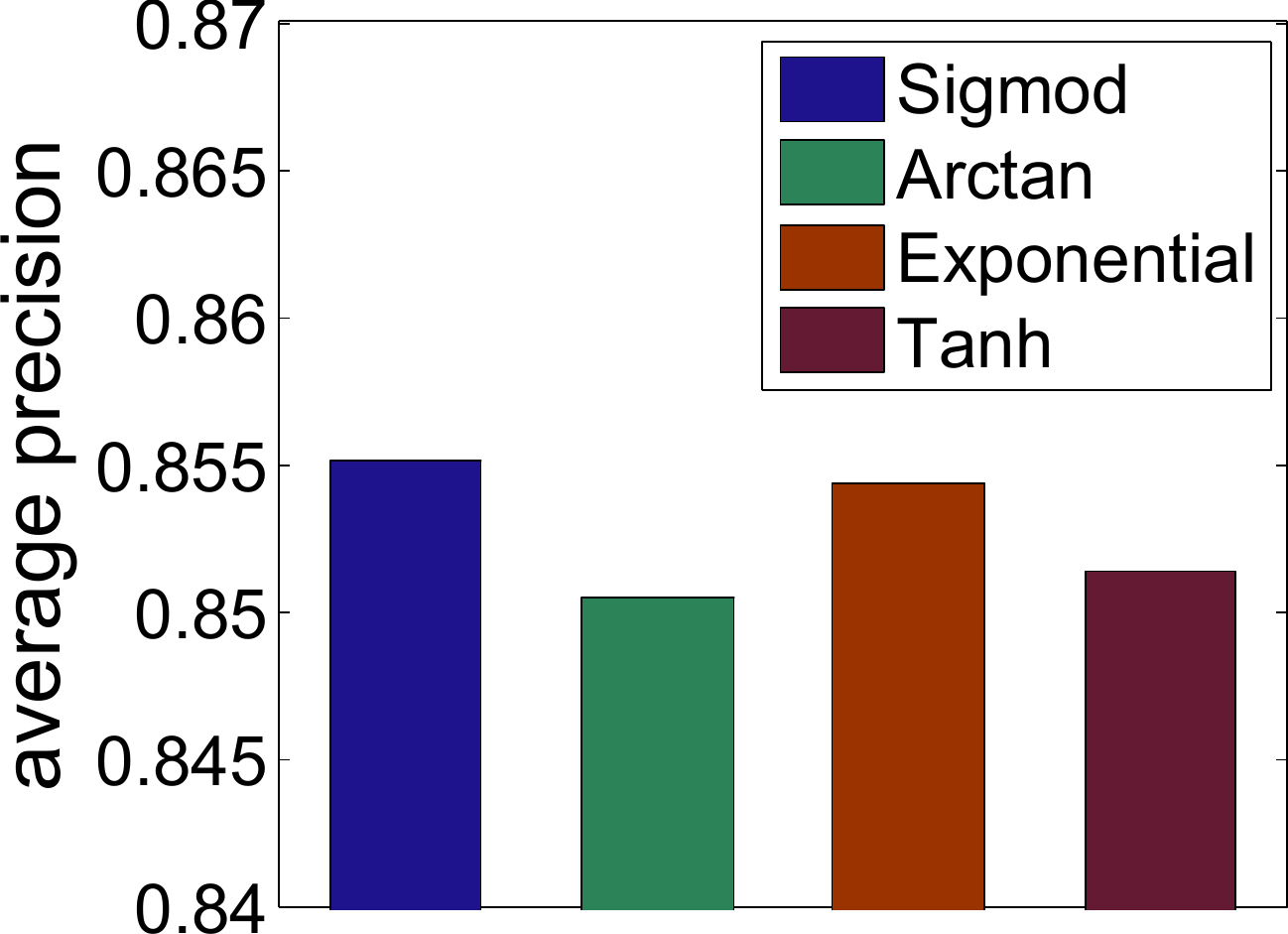}}
\subfigure{\includegraphics[width=0.19\linewidth]{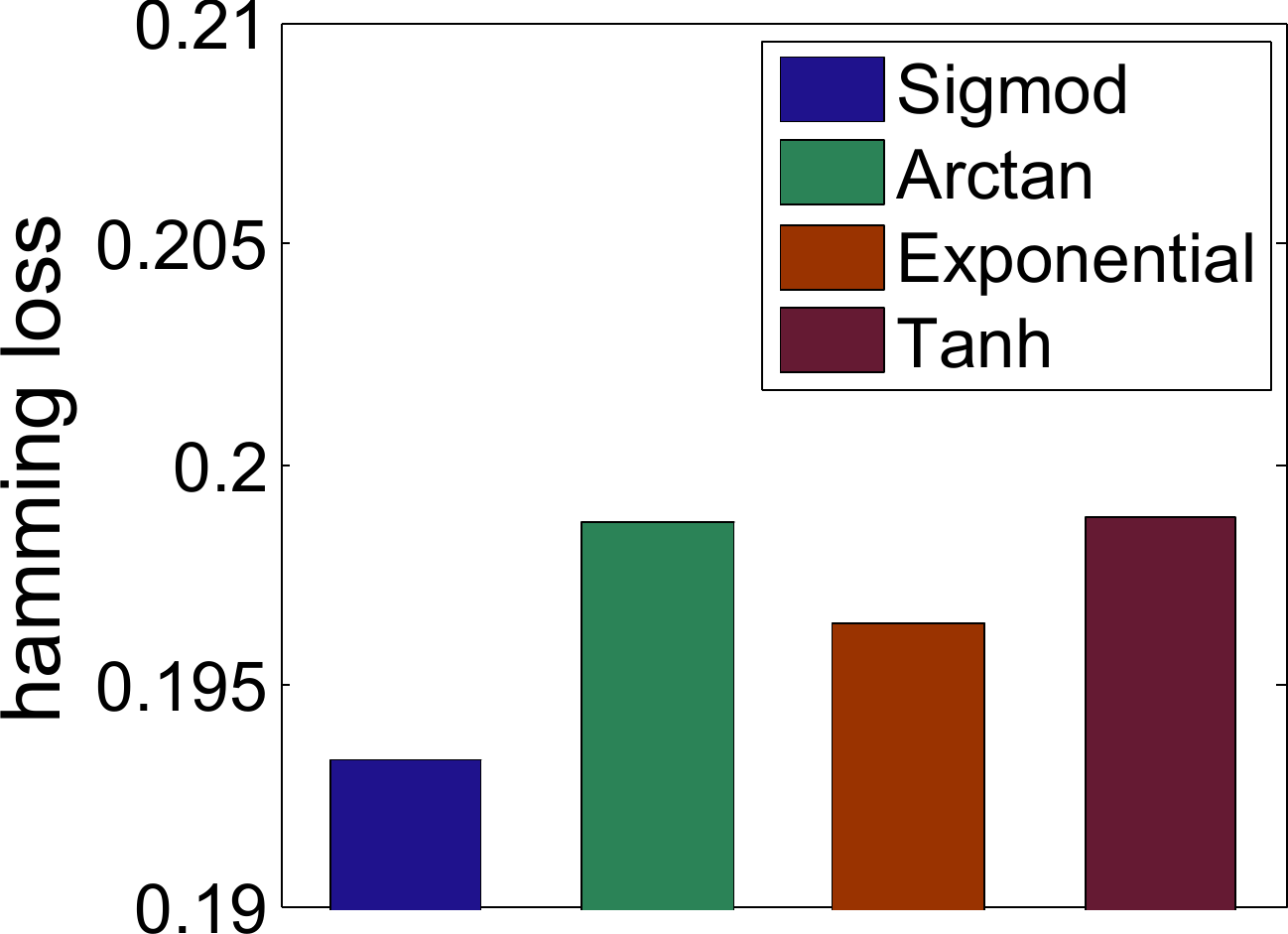}}
\subfigure{\includegraphics[width=0.19\linewidth]{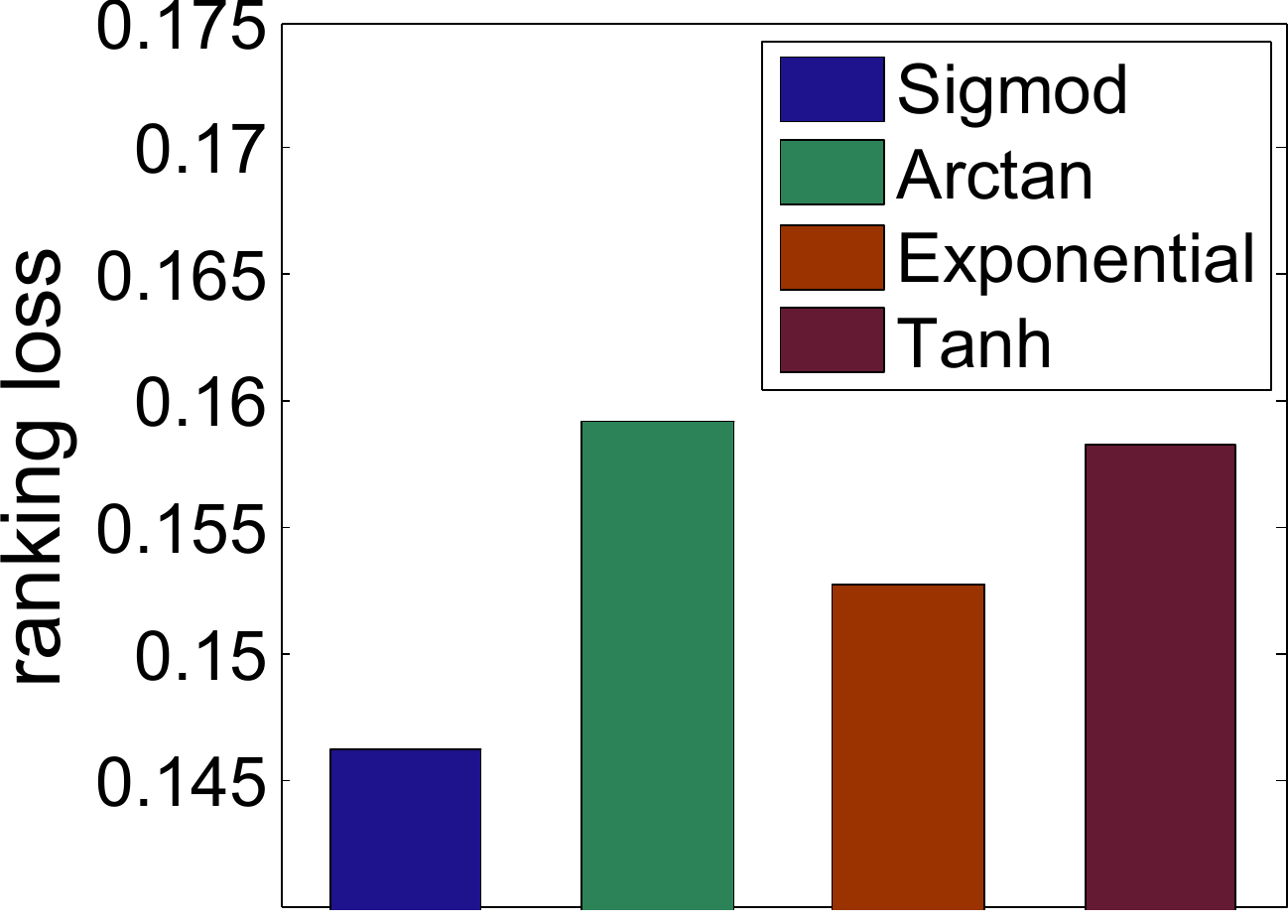}}
\subfigure{\includegraphics[width=0.19\linewidth]{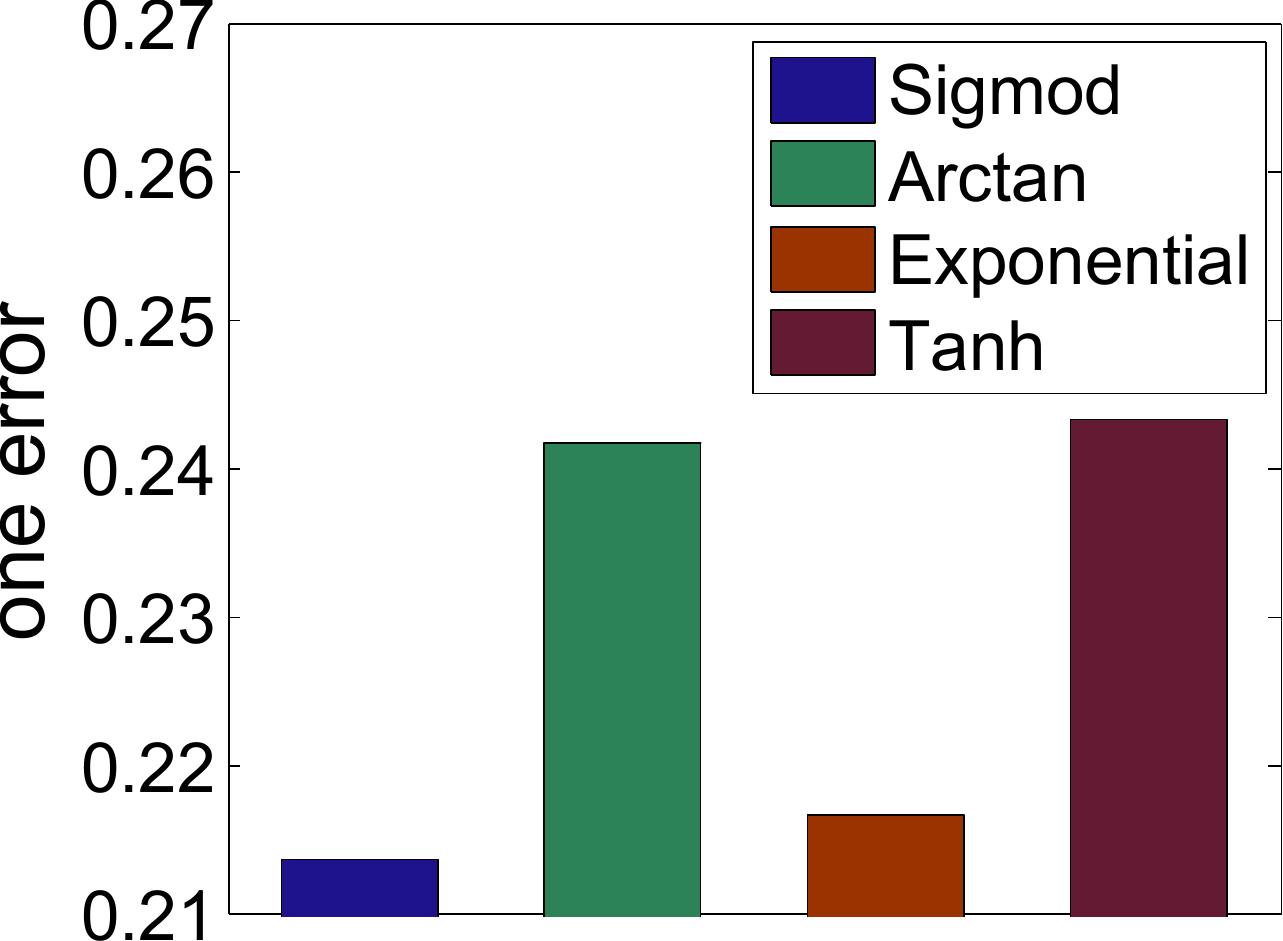}}
\subfigure{\includegraphics[width=0.19\linewidth]{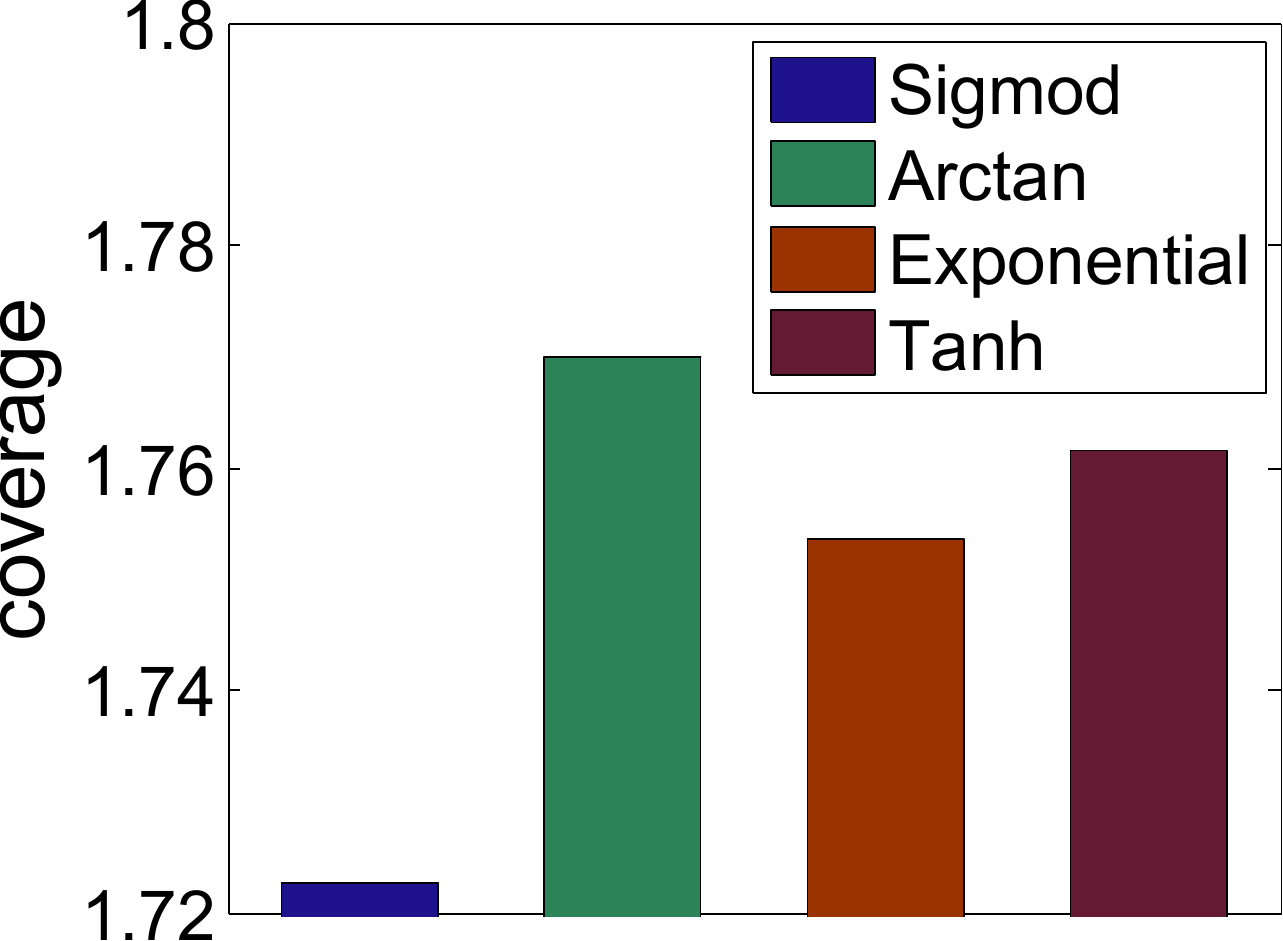}}
\subfigure{\includegraphics[width=0.19\linewidth]{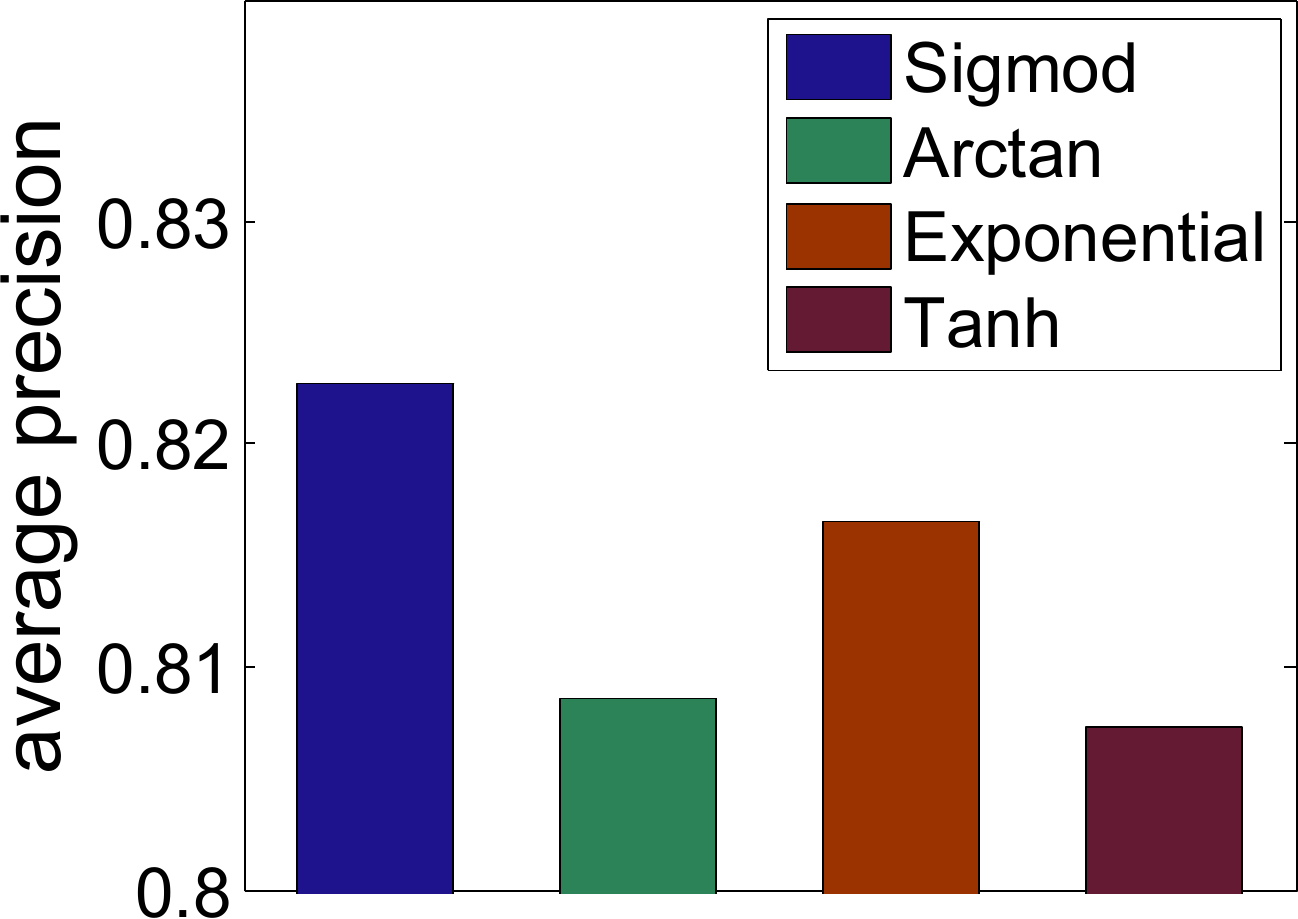}}
\caption{The performance of different self-paced functions on the \emph{flags}, \emph{scene}, and \emph{emotions} datasets. The first five figures are the results on the \emph{flags} dataset. The next five ones are the results on the  \emph{scene} dataset. The last five ones are the results on the \emph{emotions} dataset. }
\label{selfpacedfunction}
\end{figure*}
\begin{table*}[tb!]
\begin{center}
\caption{Results (mean $\pm$ std.) on the \emph{emotions} dataset. Boldface in the table denotes the best performance.}
\label{emotion}
\centering
\begin{tabular}{|c|c|c|c|c|c|c|c|}
\hline
 Dataset &   criteria          &   BSVM    &   ML-kNN   &  TRAM & RankSVM & ML-LOC & MLSPL \\
\hline
\multirow{10}{*}{\emph{emotions}}  & \multirow{2}{*}{hamming loss $\downarrow$}   &0.1937  & 0.2039   &   0.2210 &  0.3286   &   0.2047  & \textbf{0.1933} \\
                                  &     &$\pm$0.0086  &     $\pm$0.0095  &   $\pm$0.0093 &  $\pm$0.0123   &$\pm$0.0083 & $\pm$\textbf{0.0062}\\
 \cline{2-8} & \multirow{2}{*}{ranking loss $\downarrow$}            &0.1563  &    0.1714  &    0.1617  &  0.4043   & 0.1739  & \textbf{0.1462}\\
 &     &$\pm$0.0040  &  $\pm$0.0093  & $\pm$0.0075  & $\pm$0.0123   & $\pm$0.0083 & $\pm$\textbf{0.0001}\\
 \cline{2-8} & \multirow{2}{*}{one error $\downarrow$}            &0.2529 &  0.2788  &   0.2758 &  0.5549  &  0.2667  & \textbf{0.2138} \\
 &     &$\pm$0.0189  &  $\pm$0.0145  &   $\pm$0.0227  &  $\pm$0.0234   & $\pm$0.0173 & $\pm$\textbf{0.0017}\\
  \cline{2-8} & \multirow{2}{*}{coverage $\downarrow$}           &1.7882  &  1.8657  &  1.8024 &   3.0896 &  1.8879   & \textbf{1.7226} \\
  &     &$\pm$0.0306  &  $\pm$0.0615 &   $\pm$0.0524  &  $\pm$0.0917   & $\pm$0.0586 & $\pm$\textbf{0.0247}\\
   \cline{2-8} & \multirow{2}{*}{average precision $\uparrow$}           &0.8091  &  0.7910  &  0.7986 &   0.5789  &  0.7950  & \textbf{0.8228}\\
   &     &$\pm$0.0074  &    $\pm$0.0114  &   $\pm$0.0109  &  $\pm$0.0092   & $\pm$0.0084 & $\pm$\textbf{0.0005}\\
\hline
\end{tabular}
\end{center}
\end{table*}
\subsection{Studying the Performance of Self-Paced Functions}
In this section, we study the performance of different self-paced functions on the three datasets. The results are shown in Figure \ref{selfpacedfunction}. First of all, we can see that our method with different self-paced functions can achieve good performance. This shows that the self-paced functions we provide in the Sect. 2 are effective for multi-label learning.
In addition, we observe that on the \emph{flags} dataset, exponential function has the best performance in terms of all the five criteria except the Hamming distance, while on the \emph{emotions} dataset, sigmod function outperforms all the other functions. Tanh function performs similar to Arctan in terms of all the five criteria on the three datasets.
These points indicate that different scenarios indeed need different self-paced learning schemes. Therefore, it is necessary to develop more self-paced functions for multi-label learning.
\section{Conclusion}
We proposed a novel multi-label learning algorithm, namely MLSPL. By introducing a self-paced regularizer, MLSPL can learn labels according to the order of labels and instances from easy to hard. Considering that real-world scenarios usually needs different learning schemes,  we propose a general way to find the desired self-paced regularizer.
Experiments on benchmark datasets have demonstrated the effectiveness of SPMTL, compared to the state-of-the-art methods.

\bibliographystyle{IEEEtran}
\bibliography{IEEEabrv,ijcai16}

\end{document}